\definecolor{lightblue}{rgb}{.90,.95,1}
\newtheorem{remark}{Remark}
\title{A naive aggregation algorithm for improving generalization in a class of learning problems}
\author{Getachew K. Befekadu}
\begin{document}
\maketitle

\renewcommand{\thefootnote}{\arabic{footnote}}

\begin{abstract}
In this brief paper, we present a naive aggregation algorithm for a typical learning problem with expert advice setting, in which the task of improving generalization, i.e., model validation, is embedded in the learning process as a sequential decision-making problem. In particular, we consider a class of learning problem of point estimations for modeling high-dimensional nonlinear functions, where a group of experts update their parameter estimates using the discrete-time version of gradient systems, with small additive noise term, guided by the corresponding subsample datasets obtained from the original dataset. Here, our main objective is to provide conditions under which such an algorithm will sequentially determine a set of mixing distribution strategies used for aggregating the experts' estimates that ultimately leading to an optimal parameter estimate, i.e., as a consensus solution for all experts, which is better than any individual expert's estimate in terms of improved generalization or learning performances. Finally, as part of this work, we present some numerical results for a typical case of nonlinear regression problem.
\end{abstract}
\begin{keywords} 
Decision-making problem, expert systems, generalization, learning problem, modeling of nonlinear functions, point estimations, random perturbations
\end{keywords}

\section{Introduction} \label{S1}
The main objective of this brief paper is to present an aggregation algorithm for a typical learning problem with expert advice setting, in which the task of improving generalization, i.e., model validation, is embedded in the learning process as a sequential decision-making problem with dynamic allocation scenarios. In particular, the learning framework that we propose here can be viewed as an extension for enhancing the learning performance in a typical empirical risk minimization-based learning problem of point estimations for modeling of high-dimensional nonlinear functions, when we are dealing with large datasets as an instance of {\it divide-and-conquer} paradigm. To be more specific, we have a group of experts that update their parameter estimates using a discrete-time version of gradient systems with small additive noise term, guided by a set of subsample datasets obtained from the original dataset by means of bootstrapping with/without replacement or other related resampling-based techniques. Here, our interest is to provide conditions under which such an algorithm will determine a set of mixing distribution strategies used for aggregating the experts' parameter estimates that ultimately leading to an optimal parameter estimate, i.e., as a consensus solution for all experts, which is better than any individual expert's estimate in terms of improved generalization or learning performances.

This brief paper is organized as follows. In Section~\ref{S2}, we present our main results, where we provide an aggregation algorithm that can be viewed as an extension for enhancing the learning performance and improving generalization in a typical learning problem with the help of a group of experts. In Section~\ref{S3}, we present numerical results for a typical case of nonlinear regression problem, and Section~\ref{S4} contains some concluding remarks.

\section{Main results} \label{S2} In this section, we present a learning framework with expert advice setting, that can be viewed as an extension for enhancing the learning performances in a typical empirical risk minimization-based learning problem, where the task of improving generalization is embedded in the learning process as a sequential decision-making problem with dynamic allocation scenarios.

In what follows, the learning framework that we propose consists of $(K+1)$ subsample datasets of size $m$ (where $m$ is much less than the total dataset points $d$) that are generated from a given original dataset $Z^d = \bigl\{ (x_i, y_i)\bigr\}_{i=1}^d$ by means of bootstrapping with/without replacement or other related resampling-based techniques, i.e.,
\begin{align}
\hat{Z}^{(k)} = \bigl\{ (\hat{x}_i^{(k)}, \hat{y}_i^{(k)})\bigr\}_{i=1}^m, \label{Eq2.1}
\end{align}
where $(\hat{x}_i^{(k)}, \hat{y}_i^{(k)}) \in Z^d$ with $i \in \{1,\,2, \ldots, d\}$ and $k=1, 2, \ldots, K+1$. Moreover, we use the datasets $\hat{Z}^{(k)}$, $k=1,\,2, \ldots, K$, for parameter estimation, i.e., model training purpose, corresponding to each of the $K$ experts, (i.e., a group of $K$ experts, numbered $k = 1, 2, \dots, K$), while the last dataset $\hat{Z}^{(K+1)} = \bigl\{ (\hat{x}_i^{(k)}, \hat{y}_i^{(k)})\bigr\}_{i=1}^m$ will be used for improving generalization in the learning process (i.e., how well each expert's estimate performs as part of the model validation process). Here, each expert is tasked to search for a parameter $\theta \in \Gamma$, from a finite-dimensional parameter space $\mathbb{R}^p$, such that the function $h_{\theta}(x) \in \mathcal{H}$, i.e., from a given class of hypothesis function space $\mathcal{H}$, describes best the corresponding dataset used during the model training and validation processes. 

In terms of mathematical optimization construct, searching for an optimal parameter $\theta^{\ast} \in \Gamma \subset \mathbb{R}^p$ can be associated with a {\it steady-state solution} to the following gradient system, whose {\it time-evolution} is guided by the corresponding subsampled dataset $\hat{Z}^{(k)}$
\begin{align}
 \dot{\theta}^{(k)}(t) = - \nabla J_k(\theta^{(k)}(t),\hat{Z}^{(k)}), \quad \theta^{(k)}(0) = \theta_0, \quad k=1,2, \ldots, K, \label{Eq2.2}
\end{align}
with $J_k(\theta^{(k)}, \hat{Z}^{(k)}) = \frac{1}{m} \sum\nolimits_{i=1}^m {\ell}\bigl(h_{\theta^{(k)}}(\hat{x}_i^{(k)}), \hat{y}_i^{(k)}\bigr)$, where $\ell$ is a suitable loss function that quantifies the lack-of-fit between the model (e.g., see \cite{r1} for general discussions on learning via dynamical systems). Here, we specifically allow each expert to update its parameter estimate using a discrete-time version of the above related differential equations with small additive noise term, i.e.,
\begin{align}
 d\Theta_t^{(k)} = - \nabla J_k(\Theta_t^{(k)},\hat{Z}^{(k)})dt + \left(\epsilon/\sqrt{\log(t + 2)}\right) I_p dW_t^{(k)},& \quad \Theta_0^{(k)} = \theta_0^{(k)}, \notag \\
  & k =1,2, \ldots, K, \label{Eq2.3}
\end{align}
where $\epsilon > 0$ is very small positive number, $I_p$ is a $p \times p$ identity matrix, and $W_t^{(k)}$ is a $p$-dimensional standard Wiener process. 

Moreover, we remark that if $\nabla J_k(\theta,\hat{Z}^{(k)})$, for each $k \in \{=1,2, \ldots, K\}$, is uniformly Lipschitz and further satisfies the following growth condition
\begin{align}
 \bigl\vert \nabla J_k(\theta,\hat{Z}^{(k)}) \bigr\vert^2 \le L_{\rm Lip} \bigl(1 + \vert \theta \vert^2 \bigr), \quad \forall\theta \in \Gamma \subset \mathbb{R}^p, \label{Eq2.4}
\end{align}
for some constant $L_{\rm Lip} > 0$. Then, the distribution of $\Theta_t^{(k)}$ converges to the limit of the Gibbs densities proportional to $\exp \left (-J_k(\theta,\hat{Z}^{(k)})/T\right)$  as the {\it absolute temperature} $T$ tends to zero, i.e.,
\begin{align}
 T = \left(\epsilon/\sqrt{\log(t + 2)}\right)^2 \to 0 \quad \text{as} \quad t \to \infty, \label{Eq2.5}
\end{align}
which is expected to be concentrated at the global minimum $\theta^{\ast} \in \Gamma \subset \mathbb{R}^p$ of $J_k(\theta,\hat{Z}^{(k)})$.

Note that, for an equidistant discretization time $\delta=\tau_{n+1} - \tau_n = T/N$, $n=0,1,2, \ldots, N-1$, with $0=\tau_0 < \tau_1< \ldots < \tau_n<\ldots<\tau_N=T$, of the time interval $[0,T]$, the {\it Euler-Maruyama} approximation for the continuous-time stochastic processes $\Theta^{(k)} =\bigl\{ \Theta_t^{(k)}, \, 0 \le t \le T \bigr\}$, $k=1,2, \ldots, K$, satisfying the following iterative scheme     
\begin{align}
 \Theta_{n+1}^{(k)} = \Theta_{n}^{(k)} - \delta\nabla J_k(\Theta_n^{(k)},\hat{Z}^{(k)}) &+ \left(\epsilon/\sqrt{\log(\tau_{n+1} + 2)}\right)I_p \Delta W_n^{(k)},  \quad \Theta_0^{(k)} = \theta_0^{(k)}, \notag\\
 & \quad \quad\quad \quad n=0,1,\ldots, N-1,   \label{Eq2.6}
\end{align}
where we have used the notation $\Theta_{n}^{(k)}=\Theta_{\tau_n}^{(k)}$ and the increments $\Delta W_n^{(k)} = (W_{n+1}^{(k)}-W_{n}^{(k)})$ are independent Gaussian random variables with mean $\mathbb{E} (\Delta W_n^{(k)})=0$ and variance $\operatorname{Var}(\Delta W_n^{(k)}) = \delta I_p$ (e.g., see \cite{r2} or \cite{r3}).

Then, we formalize our decision-making paradigm with dynamic allocation scenarios that can be viewed as an extension for enhancing the learning performance as well as improving generalization in the learning framework. Here, at each iteration time step $n = 0, 1, 2, \ldots, N-1$, we decide on a mixing distribution $\pi_n =\left(\pi_n(1), \pi_n(2), \ldots, \pi_n(K)\right)$ over strategies, with $\pi_n(k) > 0$, for all $k \in \{1,2,\ldots, K\}$, and sum to one, i.e., $\sum\nolimits_{k=1}^K \pi_n(k) = 1$, that will be used for dynamically apportioning the weighting coefficients with respect to the current parameter estimates. Moreover, for each expert's current parameter estimate, we associate a risk measure $r_n(k) \in [0,1]$, based on an exponential function, which is determined by the current estimate $\Theta_{n}^{(k)}$ together with the validating dataset $\hat{Z}^{(K+1)}$, i.e.,
\begin{align}
 r_n(k) = 1 - \exp\left(-\frac{\gamma}{m} \sum\nolimits_{i=1}^m {\ell}\left(h_{\Theta_{n}^{(k)}}(\hat{x}_i^{(K+1)}), \hat{y}_i^{(K+1)}\right) \right), ~~ n &= 0, 1, 2, \ldots, N-1, \notag \\
                                                                                                                                      & \gamma > 0. \label{Eq2.7}
\end{align}

\begin{remark} \label{R1}
Note that the risk measure $r_n(k)$, at each iteration time step $n = 0,1, 2, \ldots, N-1$, is simply an empirical value between $0$ and $1$ for an appropriately chosen loss function $\ell$ that quantifies the lack-of-fit between the model.
\end{remark}

Note that the average (or mixture) loss strategies at each iteration time is given by  
\begin{align}
 L_n = \sum\nolimits_{k=1}^K \pi_n(k) r_n(k), \quad n = 0,1, 2, \ldots, N-1. \label{Eq2.8}
\end{align}
Then, our objective is to present an aggregation algorithm, that has a decision-theoretic minimization interpretation with dynamic allocation scenarios, which also guarantees an upper bound for the total overall mixture loss $L$, i.e.,
\begin{align}
 \min \to L &= \sum\nolimits_{n=0}^{N-1} L_n \notag \\
                 &= \sum\nolimits_{n=0}^{N-1}\sum\nolimits_{k=1}^K \pi_n(k) r_n(k). \label{Eq2.9}
\end{align}
Moreover, such an aggregation algorithm also ensures the following additional properties:
\begin{enumerate} [(i)]
 \item $r_n(k)$ tends to $0$ as $n \to \infty$ for all $k \in \{1,2, \ldots, K \}$.
\item $\bar{\Theta}_{N} = \sum\nolimits_{k=1}^K \pi_N(k) \Theta_{N}^{(k)}$ tends to the optimal parameter estimate $\theta^{\ast} \in \Gamma \subset \mathbb{R}^p$.
\end{enumerate}
In order to accomplish the above properties, we use a simple dynamic allocation strategy coupled with an iterative update scheme for computing the mixing distribution strategies $\pi_{n}(k)$, for each $k=1,2, \ldots, K$, i.e.,
\begin{align}
 \pi_{n}(k) = \frac{\omega_{n}(k)}{\sum\nolimits_{k=1}^K \omega_{n}(k)}, \label{Eq2.10}
\end{align}
while the weighting coefficients are updated according to
\begin{align}
 \omega_{n+1}(k) = \omega_{n}(k) \exp \left(r_{n}(k) \log(\beta)\right), \quad \beta \in (0,1),  \label{Eq2.11}
\end{align}
for $n = 0, 1, 2, \ldots, N-1$.

\begin{remark} \label{R2}
Note that the mixing distribution strategies $\pi_{n}(k)$, for $k=1,2, \ldots, K$, can be interpreted as a measure of quality corresponding to each of experts' current parameter estimates. Moreover, we can assign the initial weights $\omega_{0}(k)$, for $k=1,2, \ldots, K$, arbitrary values, but must be nonnegative and sum to one (see the Algorithm part in this section).
\end{remark}

Then, we state the following proposition that provides an upper bound for the total overall mixture loss.
\begin{proposition} \label{P1}
Let $L_n = \sum\nolimits_{k=1}^K \pi_n(k) r_n(k)$, $n = 0,1, 2, \ldots, N-1$, be a sequence of losses associated with the mixing distribution strategies $\pi_n =\left(\pi_n(1), \pi_n(2), \ldots, \pi_n(K)\right)$ and risk measures $r_n(k)$, for $k=1,2,\ldots, K$. Then, the total overall mixture loss $L$, i.e., $L = \sum\nolimits_{n=0}^{N-1}\sum\nolimits_{k=1}^K \pi_n(k) r_n(k)$, satisfies the following upper bound condition
\begin{align}
L \le -\frac{1}{1-\beta} \log \left(\sum\nolimits_{k=1}^K  \omega_{N}(k) \right).  \label{Eq2.12}
\end{align}
\end{proposition}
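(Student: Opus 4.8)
The plan is to follow the classical potential-function argument used in multiplicative-weights (Hedge-type) analyses. I would introduce the total weight
\[
W_n = \sum\nolimits_{k=1}^K \omega_n(k)
\]
as a potential and track how it evolves under the update rule \eqref{Eq2.11}, ultimately relating the accumulated loss $L$ to the ratio $W_N/W_0$. The whole argument is a one-step recursion followed by a telescoping sum.

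First I would substitute the update \eqref{Eq2.11} into $W_{n+1}$, using that $\exp\!\bigl(r_n(k)\log\beta\bigr)=\beta^{r_n(k)}$, to obtain
\[
W_{n+1} = \sum\nolimits_{k=1}^K \omega_n(k)\,\beta^{r_n(k)}.
\]
The crucial step is an elementary convexity estimate: since $x\mapsto\beta^x$ is convex and $r_n(k)\in[0,1]$, the chord bound on $[0,1]$ gives $\beta^{r_n(k)} \le 1-(1-\beta)\,r_n(k)$. I expect this to be the main (and essentially only) inequality driving the proof; everything else is bookkeeping.

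Second, plugging this bound in and recognizing that $\omega_n(k)=\pi_n(k)\,W_n$ by \eqref{Eq2.10}, I would factor out $W_n$ and invoke the definition \eqref{Eq2.8} of $L_n$ to get
\[
W_{n+1} \le W_n\Bigl(1-(1-\beta)\,L_n\Bigr).
\]
Taking logarithms and applying $\log(1-x)\le -x$ then yields the per-step decrement $\log W_{n+1}-\log W_n \le -(1-\beta)\,L_n$. The only point requiring care is to confirm that the argument $1-(1-\beta)L_n$ is strictly positive so that the logarithm is well defined; this holds because $L_n\in[0,1]$ (it is a convex combination of the $r_n(k)\in[0,1]$) and $1-\beta\in(0,1)$, so $(1-\beta)L_n<1$.

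Finally, I would telescope this inequality over $n=0,1,\ldots,N-1$, arriving at $\log W_N-\log W_0 \le -(1-\beta)\,L$. Invoking the normalization $W_0=\sum_{k}\omega_0(k)=1$ from Remark~\ref{R2}, so that $\log W_0=0$, and rearranging gives precisely \eqref{Eq2.12}. No step here is delicate once the convexity bound is in hand, so I anticipate the main conceptual content lies entirely in that single chord estimate rather than in any of the subsequent manipulations.
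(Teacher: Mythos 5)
Your proposal is correct and follows essentially the same argument as the paper: the chord bound $\beta^{r_n(k)} \le 1-(1-\beta)r_n(k)$, the one-step recursion $W_{n+1} \le W_n\bigl(1-(1-\beta)L_n\bigr)$, and the bound $1+t\le e^t$ (your per-step $\log(1-x)\le -x$ is the same inequality in logarithmic form), telescoped over $n$ with $W_0=1$. Your explicit verification that $1-(1-\beta)L_n>0$ and your explicit use of the normalization $\sum_k\omega_0(k)=1$ are small points the paper leaves implicit, but the substance is identical.
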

\begin{proof} Note that for any $\beta \in (0,1)$ and $r_n(k) \in [0, 1]$ for $k=1,2,\ldots, K$, we have the following inequality relation
\begin{align}
 \exp \left(r_{n}(k) \log(\beta)\right) \le 1 - (1- \beta)r_n(k), \label{Eq2.13}
\end{align}
due to the convexity argument for $\exp \left(r_{n}(k) \log(\beta)\right)\equiv\beta^{r_n(k)} \le 1 - (1 - \beta)r_n(k)$. Then, if we combine Equations~\eqref{Eq2.10} and \eqref{Eq2.11}, we will have
\begin{align}
\sum\nolimits_{k=1}^{K} \omega_{n+1}(k) &= \sum\nolimits_{k=1}^{K} \omega_{n}(k) \exp \left(r_{n}(k) \log(\beta)\right) \notag\\
                                                                   &\le \sum\nolimits_{k=1}^{K} \omega_{n}(k) \left(1 - (1- \beta)r_n(k) \right) \notag \\
                                                                   & =  \sum\nolimits_{k=1}^{K} \omega_{n}(k) -  (1- \beta) \sum\nolimits_{k=1}^{K} \omega_{n}(k)r_n(k) \notag \\
                                                                   & = \left(\sum\nolimits_{k=1}^{K} \omega_{n}(k)\right)  \left(1 - (1- \beta) \sum\nolimits_{k=1}^K \pi_n(k) r_n(k) \right). \label{Eq2.14}
\end{align}
Moreover, if we apply repeatedly for $n=0,1,2, \ldots, N-1$ to the above equation, then we have
\begin{align}
\sum\nolimits_{k=1}^{K} \omega_{N}(k) &\le \prod\nolimits_{n=0}^{N-1}\left(1 - (1- \beta) \sum\nolimits_{k=1}^K \pi_n(k) r_n(k) \right) \notag \\
                                                                &\le \exp \left( - (1- \beta) \sum\nolimits_{n=0}^{N-1}\sum\nolimits_{k=1}^K \pi_n(k) r_n(k) \right). \label{Eq2.15}
\end{align}
Notice that (due to the inequality relation $1 + t \le \exp(t)$ for all $t$) the right-hand side of the above equation satisfies the following inequality
\begin{align}
 1 - (1- \beta) \sum\nolimits_{n=0}^{N-1}\sum\nolimits_{k=1}^K \pi_n(k) r_n(k) \le \exp \left( - (1- \beta) \sum\nolimits_{n=0}^{N-1}\sum\nolimits_{k=1}^K \pi_n(k) r_n(k)\right), \label{Eq2.16}
\end{align}
that further gives us the following result
\begin{align}
\sum\nolimits_{k=1}^{K} \omega_{N}(k) \le \exp \left( - (1- \beta) \sum\nolimits_{n=0}^{N-1}\sum\nolimits_{k=1}^K \pi_n(k) r_n(k) \right). \label{Eq2.17}
\end{align}
Hence, the statement in the proposition follows immediately.
\end{proof}

Note that the dynamic allocation strategy in Equation~\eqref{Eq2.9} together with the iterative update scheme in Equation~\eqref{Eq2.9} provide conditions under which such an algorithm determines a set of mixing distribution strategies used for aggregating the experts' estimates that ultimately leading to an optimal parameter estimate, i.e., as a consensus solution for all experts, which is better than any individual expert's estimate in terms of improved generalization or learning performances.

Here, it is worth remarking that the risk measure $r_n(k)$ tends to $0$ as $n \to \infty$ for all $k \in \{1,2, \ldots, K \}$. Moreover, if we allow each expert to update its next parameter estimate with a modified initial condition $\bar{\Theta}_n= \sum\nolimits_{k=1}^K \pi_n(k) \Theta_{n}^{(k)}$ (i.e., the averaged value for the experts' current estimates) as follows
\begin{align}
 \Theta_{n+1}^{(k)} &= \bar{\Theta}_{n} - \delta\nabla J_k(\bar{\Theta}_{n},\hat{Z}^{(k)}) + \left(\epsilon/\sqrt{\log(\tau_{n+1} + 2)}\right)I_p \Delta W_n^{(k)}. \label{Eq2.18}
\end{align}
Then, $\bar{\Theta}_{N} = \sum\nolimits_{k=1}^K \pi_N(k) \Theta_{N}^{(k)}$ tends to the optimal parameter estimate $\theta^{\ast} \in \Gamma \subset \mathbb{R}^p$.

In what follows, we provide our algorithm that implements such an aggregation scheme for improving generalization in a typical class of learning problems.

{\rm \footnotesize

\begin{framed}

{\bf ALGORITHM:} Improving Generalization in a Class of Learning Problems
\begin{itemize}
\item[{\bf Input:}] The original dataset $Z^d = \bigl\{ (x_i, y_i)\bigr\}_{i=1}^d$; $K+1$ number of subsampled datasets; $m$ subsample data size. Then, by means of bootstrapping technique with/without replacement, generate $K+1$ subsample datasets:
\begin{align*}
\hat{Z}^{(k)} = \bigl\{ (\hat{x}_i^{(k)}, \hat{y}_i^{(k)})\bigr\}_{i=1}^m, \quad k=1,2, \ldots, K+1,
\end{align*}
with $(\hat{x}_i^{(k)}, \hat{y}_i^{(k)}) \in Z^d$ and $i \in \{1,\,2, \ldots, d\}$; an equidistant discretization time $\delta=\tau_{n+1} - \tau_n = T/N$, for $n=0,1,2, \ldots, N-1$, with $0=\tau_0 < \tau_1< \ldots < \tau_n<\ldots<\tau_N=T$, of the time interval $[0,T]$; $\gamma > 0$ and $\beta \in (0,1)$.
\item[{\bf 0.}] {\bf Initialize:} Start with $n=0$, and set $\pi_0(k) =\omega_0(k)= 1/K$, $\Theta_0^{(k)} = \theta_0$ for all $k=1,2,\ldots, K$.
\item[{\bf 1.}] {\bf Update Parameter Estimates:}
\begin{align*}
\bar{\Theta}_{n} &= \sum\nolimits_{k=1}^K \pi_{n}(k) \Theta_{n}^{(k)}\\
 \Theta_{n+1}^{(k)} &= \bar{\Theta}_{n} - \delta\nabla J_k(\bar{\Theta}_{n},\hat{Z}^{(k)}) + \left(\epsilon/\sqrt{\log(\tau_{n+1} + 2)}\right)I_p \Delta W_n^{(k)}, \quad k=1,2,\ldots, K\end{align*}
\item[{\bf 2.}] {\bf Update the allocation Distribution Strategies and the Weighting Coefficients:}
 \begin{itemize}
\item[{\bf i.}] {Compute the risk measure associate with each experts's estimate:}
\begin{align*}
r_n(k) = 1 - \exp\left(-\frac{\gamma}{m} \sum\nolimits_{i=1}^m {\ell}\left(h_{\Theta_{n}^{(k)}}(\hat{x}_i^{(K+1)}), \hat{y}_i^{(K+1)}\right) \right)
\end{align*}
 \item[{\bf ii.}] {Update the weighting coefficients:}
\begin{align*}
\omega_{n+1}(k) = \omega_{n}(k) \exp \left(r_{n}(k) \log(\beta)\right)
\end{align*}
 \item[{\bf iii.}] {Update the allocation distribution strategies:}
\begin{align*}
  \pi_{n+1}(k) = \frac{\omega_{n+1}(k)}{\sum\nolimits_{k=1}^K \omega_{n+1}(k)}
\end{align*}
 \end{itemize}
 for $k=1,2, \ldots, K$.
 \item[{\bf 3.}] Increment $n$ by $1$ and, then repeat Steps $1$ and $2$ until convergence, i.e., $\Vert \bar{\Theta}_{n+1} - \bar{\Theta}_{n}\Vert \le {\rm tol}$, or $n=N-1$.
 \item[{\bf Output:}] An optimal parameter value $\bar{\Theta}_{N} = \theta^{\ast}$.
\end{itemize}
\end{framed}}
Finally, it is worth mentioning that such a learning framework could be interesting to investigate from game-theoretic perspective with expert advice (e.g., see \cite{r4} and \cite{r5} for an interesting study in computer science literature).

 \section{Numerical simulation results} \label{S3} In this section, we presented numerical results for a simple nonlinear regression problem. In our simulation study, the numerical data for the population of {\it Paramecium caudatum}, which is a species of unicellular organisms, grown in a nutrient medium over $24$ days (including the starting day of the experiment), were digitized using the Software: WebPlotDigitizer \cite{r7} from the figures in the paper by F.G. Gause \cite{r5} (see also \cite{r6}). Here, our interest is to estimate parameter values for the population growth model, on the assumption that the model obeys the logistic law, i.e.,
 \begin{align*}
  N_{\theta}(t) = \frac{ N_0\,N_e}{N_0 + (N_e - N_0)\exp(-r t)}, \quad \theta = (N_0, N_e, r),
\end{align*}
 where $N_{\theta}(t)$ is the number of {\it Paramecium caudatum} population at time $t$ in $[{\rm Days}]$, and $N_0$, $N_e$ and $r$ (i.e., $\theta = (N_0, N_e, r)$) are the parameters to be estimated using the digitized dataset obtained from Gause's paper, i.e., the original dataset $Z^d = \bigl\{(t_i, N_i)\bigr\}_{i=1}^{d}$, where $d=23$ is the total digitized dataset points, $t_i$ is the time in $[{\rm Days}]$ and $N_i$ is the corresponding number of {\it Paramecium caudatum} population. Moreover, we generated a total of $26$ subsampled datasets of size $m = 23$ from the digitized original dataset by means of bootstrapping with replacement, i.e., the datasets $\hat{Z}^{(k)} = \bigl\{ (\hat{t}_i^{(k)}, \hat{N}_i^{(k)})\bigr\}_{i=1}^{23}$, with $k=1,2,\ldots, 25$, will be used for model training purpose, while the last dataset $\hat{Z}^{(26)} = \bigl\{ (\hat{t}_i^{(26)}, \hat{N}_i^{(26)})\bigr\}_{i=1}^{23}$ for generalization or model validation.
 \begin{center}
\begin{figure}[h]
\begin{center}
 \includegraphics[scale=0.235]{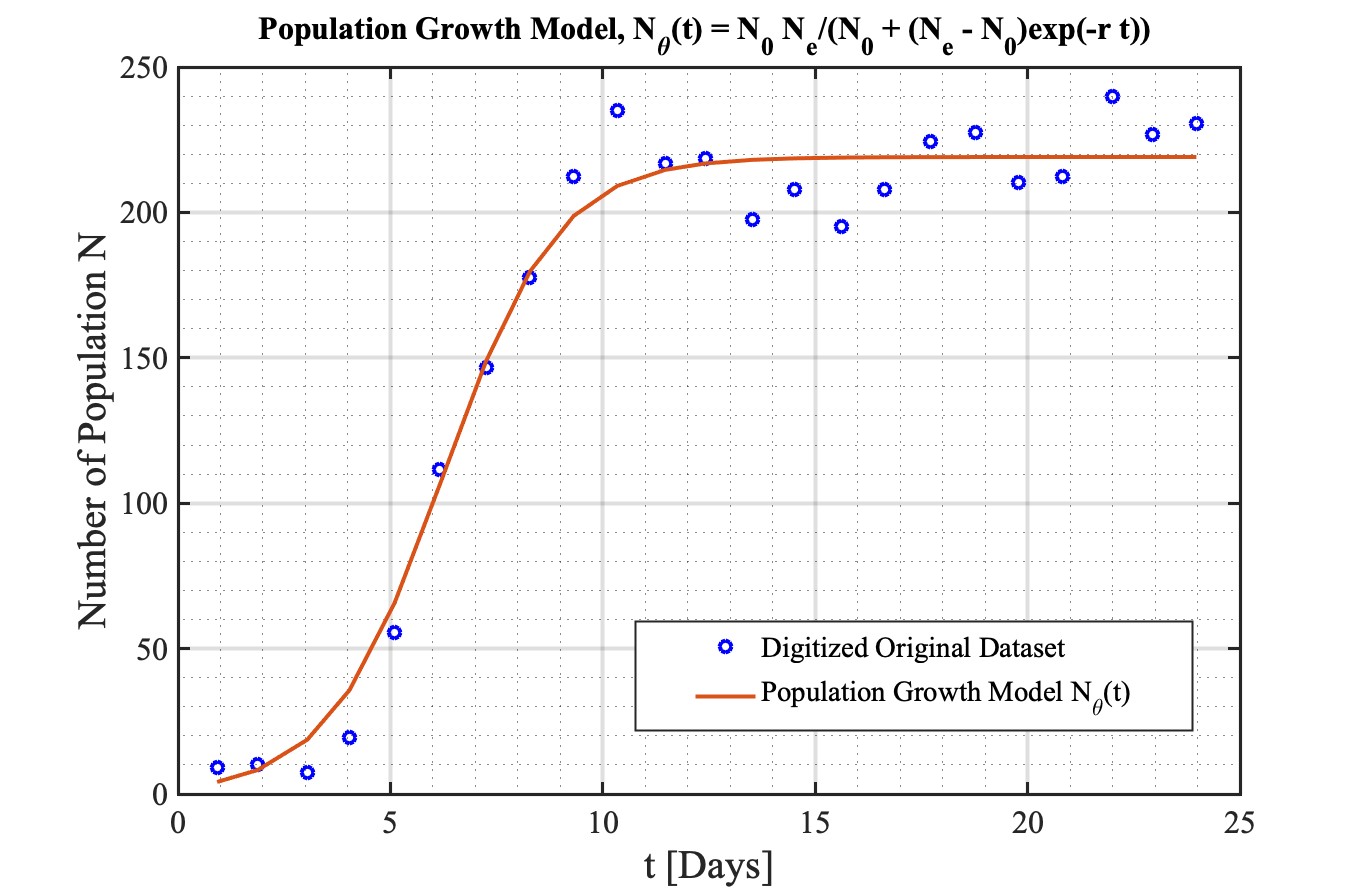}
  \caption{Plots for the original dataset and the population growth model.}\label{Fig1}
\end{center}
\end{figure}
\end{center} 
Note that we used a simple Euler--Maruyama time discretization approximation scheme to solve numerically the corresponding system of SDEs (cf. Equation~\eqref{Eq2.3}), with an equidistance discretization time $\delta = 1 \times 10^{-5}$ of the time interval $[0,1]$. For both model training and model validation processes, we used the usual quadratic loss function 
 \begin{align*}
 J_k(\theta,\hat{Z}^{(k)})=(1/23)\sum\nolimits_{i=1}^{23} \left(N_{\theta}(\hat{t}_i^{(k)}) - \hat{N}_i^{(k)}\right)^2, \quad k=1,2,\ldots, K+1,
\end{align*}
that quantifies the lack-of-fit between the model and the corresponding datasets. Figures~\ref{Fig1} shows both the digitized original dataset from Gause's paper and the population growth model $N(t)$, with global optimal parameter values $N_0^{\ast} = 2.1070$, $N_e^{\ast} = 219.0527$ and $r^{\ast} =0.7427$, versus time $t$ in $[{\rm Days}]$ on the same plot. In our simulation, we used a noise level of $\epsilon = 0.001$, and parameter values of $\gamma = 0.01$ and $\beta = 0.5$ (i.e., satisfying $\gamma > 0$ and $\beta \in (0,1)$, see the Algorithm in Section~\ref{S2}). Here, we remark that the proposed learning framework determined the parameter values for $N_0$ which is close to the initial experimental population size at time $t=0$, i.e., two {\it Paramecium caudatum} organisms. 
 
\section{Concluding remarks} \label{S4}
In this brief paper, we presented an algorithm that can be viewed as an extension for enhancing the learning performances in a typical empirical risk minimization-based learning problem, where the task of improving generalization is embedded in the learning process as a sequential decision-making problem with dynamic allocation scenarios. Moreover, we also provided conditions under which such an algorithm sequentially determines a set of distribution strategies used for aggregating across a group of experts' estimates that ultimately leading to an optimal parameter estimate, i.e., as a consensus solution for all experts, which is better than any individual expert's estimate in terms of improved generalization or performances. Finally, as part of this work, we presented some numerical results for a typical nonlinear regression problem.

\end{document}